\providecommand{\tabularnewline}{\\}
\theoremstyle{plain}
\newtheorem{thm}{\protect\theoremname}
\theoremstyle{remark}
\newtheorem{rem}[thm]{\protect\remarkname}
\theoremstyle{plain}
\newtheorem{prop}[thm]{\protect\propositionname}
\theoremstyle{definition}
\newtheorem{defn}[thm]{\protect\definitionname}
\renewcommand{\cite}{\citep}
\providecommand{\definitionname}{Definition}
\providecommand{\propositionname}{Proposition}
\providecommand{\remarkname}{Remark}
\providecommand{\theoremname}{Theorem}
\begin{document}
\title{Stable Hadamard Memory: Revitalizing Memory-Augmented Agents for Reinforcement
Learning}
\author{Hung Le{\small{}, }Kien Do, Dung Nguyen, Sunil Gupta, and Svetha Venkatesh{\small{}}\\
{\small{}Applied AI Institute, Deakin University, Geelong, Australia}\\
\texttt{\footnotesize{}thai.le@deakin.edu.au}}
\maketitle
\begin{abstract}
Effective decision-making in partially observable environments demands
robust memory management. Despite their success in supervised learning,
current deep-learning memory models struggle in reinforcement learning
environments that are partially observable and long-term. They fail
to efficiently capture relevant past information, adapt flexibly to
changing observations, and maintain stable updates over long episodes.
We theoretically analyze the limitations of existing memory models
within a unified framework and introduce the Stable Hadamard Memory,
a novel memory model for reinforcement learning agents. Our model
dynamically adjusts memory by erasing no longer needed experiences
and reinforcing crucial ones computationally efficiently. To this
end, we leverage the Hadamard product for calibrating and updating
memory, specifically designed to enhance memory capacity while mitigating
numerical and learning challenges. Our approach significantly outperforms
state-of-the-art memory-based methods on challenging partially observable
benchmarks, such as meta-reinforcement learning, long-horizon credit
assignment, and POPGym, demonstrating superior performance in handling
long-term and evolving contexts.
\end{abstract}

\section{Introduction}

Reinforcement learning agents necessitate memory. This is especially
true in Partially Observable Markov Decision Processes (POMDPs \cite{kaelbling1998planning}),
where past information is crucial for making informed decisions. However,
designing a robust memory remains an enduring challenge, as agents
must not only store long-term memories but also dynamically update
them in response to evolving environments. Memory-augmented neural
networks (MANNs)--particularly those developed for supervised learning
\cite{graves2016hybrid,Vaswani2017AttentionIA}, while offering promising
solutions, have consistently struggled in these dynamic settings.
Recent empirical studies \cite{morad2023popgym,ni2024transformers}
have shown that MANNs exhibit instability and underperform simpler
vector-based memory models such as GRU \cite{chung2014empirical}
or LSTM \cite{hochreiter1997long}. The issue is exacerbated in complex
and sparse reward scenarios where agents must selectively retain and
erase memories based on relevance. Unfortunately, existing methods
fail to provide a memory writing mechanism that is simultaneously
efficient, stable, and flexible to meet these demands.

In this paper, we focus on designing a better writing mechanism to
encode new information into the memory\emph{.} To this end, we introduce
the Hadamard Memory Framework (HMF), a unified model that encompasses
many existing writing methods as specific cases. This framework highlights
the critical role of \emph{memory calibration}, which involves linearly
adjusting memory elements by multiplying the memory matrix with a
\emph{calibration matrix} and then adding an\emph{ update matrix}.
By leveraging Hadamard products that operate element-wise on memory
matrices, we allow memory writing without mixing the memory cells
in a computationally efficient manner. More importantly, the calibration
and update matrices are dynamically computed based on the input at
each step. This enables the model to learn adaptive memory rules,
which are crucial for generalization. For instance, in meta-reinforcement
learning with varying maze layouts, a fixed memory update rule may
work for one layout but fail in another. By allowing the calibration
matrix to adjust according to the current maze observation, the agent
can learn to adapt to any layout configuration. A dynamic calibration
matrix also enables the agent to flexibly forget and later recall
information as needed. For example, an agent navigating a room may
need to remember the key's location, retain it during a detour, and
later recall it when reaching a door, while discarding irrelevant
detour events.

Although most current memory models can be reformulated within the
HMF, they tend to be either overly simplistic with limited calibration
capabilities \cite{katharopoulos2020transformers,radford2019language}
or unable to manage memory writing reliably, suffering from gradient
vanishing or exploding issues \cite{ba2016using,morad2024reinforcement}.
To address these limitations, we propose a specific instance of HMF,
called Stable Hadamard Memory, which introduces a novel calibration
mechanism based on two key principles: (i) dynamically adjusting memory
values in response to the current context input to selectively weaken
outdated or enhance relevant information, and (ii) ensuring the expected
value of the calibration matrix product remains bounded, thereby preventing
gradient vanishing or exploding. Through extensive experimentation
on POMDP benchmarks, including meta reinforcement learning, long-horizon
credit assignment, and hard memorization games, we demonstrate that
our method consistently outperforms state-of-the-art memory-based
models in terms of performance while also delivering competitive speed.
We also provide comprehensive ablation studies that offer insights
into the components and internal workings of our memory models.

\section{Background}

\subsection{Reinforcement Learning Preliminaries}

A Partially Observable Markov Decision Process (POMDP) is formally
defined as a tuple $\langle S,A,O,R,P,\gamma\rangle$, where $S$
is the set of states, $A$ is the set of actions, $O$ is the observation
space, $R:S\times A\to\mathbb{R}$ is the reward function, $P:S\times A\to\Delta(S)$
defines the state transition probabilities, and $\gamma\in[0,1)$
is the discount factor. Here, the agent does not directly observe
the true environment state $s_{t}$. Instead, it receives an observation
$o_{t}\sim O(s_{t})$, which provides partial information about the
state, often not enough for optimal decision making. Therefore, the
agent must make decisions based on its current observation $o_{t}$
and a history of previous observations, actions, and rewards $(a_{0},r_{0},o_{1},\dots,a_{t-1},r_{t-1},o_{t})$.
The history may exclude past rewards or actions. 

Let us denote the input context at timestep $t$ as $x_{t}=\left(o_{t},a_{t-1},r_{t-1}\right)$
and assume that we can encode the sequence of contexts into a memory
$M_{t}=f\left(\left\{ x_{i}\right\} _{i=1}^{t}\right)$. The goal
is to learn a policy $\pi(a_{t}|M_{t})$ that maximizes the expected
cumulative discounted reward:

\begin{equation}
J(\pi)=\mathbb{E}_{\pi}\left[\sum_{t=1}^{\infty}\gamma^{t}R(s_{t},a_{t})|a_{t}\sim\pi(a_{t}|M_{t}),s_{t+1}\sim P(s_{t+1}|s_{t},a_{t}),o_{t}\sim O(s_{t})\right]\label{eq:rlobj}
\end{equation}
Thus, a memory system capable of capturing past experiences is essential
for agents to handle the partial observability of the environment
while maximizing long-term rewards. 

\subsection{Memory-Augmented Neural Networks}

We focus on matrix memory $M$, and to simplify notation, we assume
it is a square matrix. Given a memory $M\in\mathbb{R}^{H\times H}$,
we usually read from the memory as:

\begin{equation}
h_{t}=M_{t}q\left(x_{t}\right)\label{eq:hread}
\end{equation}
where $q$ is a query network $q:\mathbb{R}^{D}\mapsto\mathbb{R}^{H}$
to map an input context $x_{t}\in\mathbb{R}^{D}$ to a query vector.
The read value $h_{t}$ later will be used as the input for policy/value
functions. Even more important than the reading process is the memory
writing:\emph{ How can information be written into the memory to ensure
efficient and accurate memory reading?} A general formulation for
memory writing is $M_{t}=f\left(M_{t-1},x_{t}\right)$ with $f$ as
the update function that characterizes the memory models.

The simplest form of memory writing traces back to Hebbian learning
rules: $M_{t}=M_{t-1}+x_{t}\otimes x_{t}$ where $\otimes$ is outer
product \cite{kohonen1973representation,hebb2005organization}. Later,
researchers have proposed ``fast weight'' memory \cite{marr1991theory,schmidhuber1992learning,ba2016using}:
\begin{equation}
M_{t}=M_{t-1}\lambda+\eta g\left(M_{t-1},x_{t}\right)\otimes g\left(M_{t-1},x_{t}\right)\label{eq:fw}
\end{equation}
where $g$ is a non-linear function that take the previous memory
and the current input data as the input; $\lambda$ and $\eta$ are
constant hyperparameters. On the other hand, computer-inspired memory
architectures such as Neural Turing Machine (NTM, \cite{graves2014neural})
and Differentiable Neural Computer (DNC, \cite{graves2016hybrid})
introduce more sophisticated memory writing:
\begin{equation}
M_{t}=M_{t-1}\odot\left(\boldsymbol{1}-w\left(M_{t-1},x_{t}\right)\otimes e\left(M_{t-1},x_{t}\right)\right)+w\left(M_{t-1},x_{t}\right)\otimes v\left(M_{t-1},x_{t}\right)
\end{equation}
where $w$, $e$ and $v$ are non-linear functions that take the previous
memory and the current input data as the input to produce the writing
weight, erase and value vectors, respectively. $\odot$ is the Hadamard
(element-wise) product. 

The problem with non-linear $f$ w.r.t $M$ is that the computation
must be done in recursive way, and thus being slow. Therefore, recent
memory-based models adopt simplified linear dynamics. For example,
Linear Transformer's memory update reads \cite{katharopoulos2020transformers}:
\begin{equation}
M_{t}=M_{t-1}+v\left(x_{t}\right)\otimes\frac{\phi\left(k\left(x_{t}\right)\right)}{\sum_{t}\phi\left(k\left(x_{t}\right)\right)}
\end{equation}
where $\phi$ is an activation function; $k$ and $v$ are functions
that transform the input to key and value. Recently, \citet{beck2024xlstm}
have proposed matrix-based LSTM (mLSTM):

\begin{equation}
M_{t}=\mathtt{f}\left(x_{t}\right)M_{t-1}+\mathtt{i}\left(x_{t}\right)v\left(x_{t}\right)\otimes k\left(x_{t}\right)
\end{equation}
where $\mathtt{f}$ and $\mathtt{i}$ are the forget and input gates,
respectively. In another perspective inspired by neuroscience, Fast
Forgetful Memory (FFM, \cite{morad2024reinforcement}) employs a parallelable
memory writing, which processes a single step update as follows:

\begin{equation}
M_{t}=M_{t-1}\odot\gamma+\left(v\left(x_{t}\right)\otimes\boldsymbol{1}^{\top}\right)\odot\gamma^{t-n}\label{eq:ffm}
\end{equation}
where $\gamma$ is a trainable matrix, $v$ is an input transformation
function, and $n$ is the last timestep.

\section{Methods}

In this section, we begin by introducing a unified memory writing
framework that incorporates several of the memory writing approaches
discussed earlier. Next, we examine the limitations of current memory
writing approaches through an analysis of this framework. Following
this analysis, we propose specialized techniques to address these
limitations. For clarity and consistency, all matrix and vector indices
will be referenced starting from 1, rather than 0. Constant matrices
are denoted by bold numbers.

\subsection{Hadamard Memory Framework (HMF)}

We propose a general memory framework that uses the Hadamard product
as its core operation. The memory writing at time step $t$ is defined
as:

\begin{equation}\label{eq:linm}   
\tcbox[nobeforeafter]{$M_{t}=M_{t-1}\odot\underbrace{C_{\theta}\left(x_{t}\right)}_{C_{t}}+\underbrace{U_{\varphi}\left(x_{t}\right)}_{U_{t}}$} 
\end{equation}where $C_{\theta}:\mathbb{R}^{D}\mapsto\mathbb{R}^{H\times H}$and
$U_{\varphi}:\mathbb{R}^{D}\mapsto\mathbb{R}^{H\times H}$ are parameterized
functions that map the current input $x_{t}$ to two matrices $C_{t}$
(calibration matrix) and $U_{t}$ (update matrix). Here, $\theta$
and $\varphi$ are referred to as calibration and update parameters.
Intuitively, the calibration matrix $C_{t}$ determines which parts
of the previous memory $M_{t-1}$ should be weakened and which should
be strengthened while the update matrix $U_{t}$ specifies the content
to be encoded into the memory. We specifically choose the Hadamard
product ($\odot$) as the matrix operator because it operates on each
memory element individually. We avoid using the matrix product to
prevent mixing the content of different memory cells during calibration
and update. Additionally, the matrix product is computationally slower. 

There are many ways to design $C_{t}$ and $U_{t}$. Given proper
choices of $C_{t}$ and $U_{t}$, Eqs. \ref{eq:fw}-\ref{eq:ffm}
can be reformulated into Eq. \ref{eq:linm}. Inspired by prior ``fast
weight'' works, we propose a simple update matrix:

\begin{equation}\label{eqn:ut}   
\tcbox[nobeforeafter]{$U_{\varphi}\left(x_{t}\right)=\eta_{\varphi}\left(x_{t}\right)\left[v\left(x_{t}\right)\otimes k\left(x_{t}\right)\right]$} 
\end{equation}where $k$ and $v$ are trainable neural networks that transform the
input $x_{t}$ to key and value representations. $\eta_{\varphi}:\mathbb{R}^{D}\mapsto\mathbb{R}$
is a parameterized function that maps the current input $x_{t}$ to
an update gate that controls the amount of update at step $t$. For
example, if $\eta_{\varphi}\left(x_{t}\right)=0$, the memory will
not be updated with any new content, whereas $\eta_{\varphi}\left(x_{t}\right)=1$
the memory will be fully updated with the content from the $t$-th
input. We implement $\eta_{\varphi}\left(x_{t}\right)$ as a neural
network with $\mathrm{sigmoid}$ activation function. 

We now direct our focus towards the design of $C_{t}$, which is the
core contribution of our work. The calibration matrix selectively
updates the memory by erasing no longer important memories and reinforcing
ongoing critical ones. In a degenerate case, if $C_{t}=\boldsymbol{1}$
for all $t$, the memory will not forget or strengthen any past information,
and will only memorize new information over time, similar to the Hebbian
Rule and Transformers. To analyze the role of the calibration matrix,
it is useful to unroll the recurrence, leading to the closed-form
equation (see proof in Appendix \ref{subsec:Close-form-Memory-Update}):

\begin{equation}
M_{t}=M_{0}\prod_{i=1}^{t}C_{i}+\sum_{i=1}^{t}U_{i}\odot\prod_{j=i+1}^{t}C_{j}\label{eq:unroll}
\end{equation}
where $\overset{}{\prod}$ represents element-wise products. Then,
$h_{t}=M_{t}q\left(x_{t}\right)=M_{0}\prod_{i=1}^{t}C_{i}q\left(x_{t}\right)+\sum_{i=1}^{t}U_{i}\odot\prod_{j=i+1}^{t}C_{j}q\left(x_{t}\right)$.
Calibrating the memory is important because without calibration ($C_{t}=\boldsymbol{1}$
$\forall t$), the read value becomes: $h_{t}=M_{0}q\left(x_{t}\right)+\sum_{i=1}^{t}U_{i}q\left(x_{t}\right).$
In this case, making $h_{t}$ to reflect a past context at any step
$j$ requires that $q\left(x_{t}\right)\neq0$ and $\sum_{i\neq j}U_{i}q\left(x_{t}\right)=\sum_{i\neq j}\eta_{\varphi}\left(x_{i}\right)v\left(x_{i}\right)\left[k\left(x_{i}\right)\cdot q\left(x_{t}\right)\right]\approx\boldsymbol{0}$,
which can be achieved if we can find $q\left(x_{t}\right)$ such that
$k\left(x_{i}\right)\cdot q\left(x_{t}\right)\approx0\,\,\forall i\neq j$.
Yet, this becomes hard when $T\gg H$ and $\eta_{\varphi}\left(x_{i}\right)\neq\boldsymbol{0}$
as it leads to an overdetermined system with more equations than variables.
We note that avoiding memorizing any $i$-th step with $\eta_{\varphi}\left(x_{i}\right)=0$
is suboptimal since $x_{i}$ may be required for another reading step
$t'\neq t$.

Therefore, at a certain timestep $t$, it is critical to eliminate
no longer relevant timesteps from $M_{t}$ by calibration, i.e., $U_{i}\odot\prod_{j=i+1}^{t}C_{j}\approx\boldsymbol{0}$
for unimportant $i$ (forgetting). For example, an agent may first
encounter an important event, like seeing a color code, before doing
less important tasks, such as picking apples. When reaching the goal
requiring to identify a door matching the color code, it would be
beneficial for the agent to erase memories related to the apple-picking
task, ensuring a clean retrieval of relevant information--the color
code. Conversely, if timestep $i$ becomes relevant again at a later
timestep $t'$, we need to recover its information, ensuring $U_{i}\odot\prod_{j=i+1}^{t'}C_{j}\neq\boldsymbol{0}$
(strengthening), just like the agent, after identifying the door,
may return to collecting apple task.
\begin{rem}
In the Hadamard Memory Framework, calibration should be enabled ($C_{t}\neq\boldsymbol{1}$)
and conditioned on the input context.
\end{rem}

Regarding computing efficiency, if $C_{t}$ and $U_{t}$ are not functions
of $M_{<t}$, we can compute the memory using Eq. \ref{eq:unroll}
in parallel, ensuring fast execution. In particular, the set of products
$\left\{ \prod_{j=i+1}^{t}C_{j}\right\} _{i=1}^{t}$ can be calculated
in parallel in $O\left(\log t\right)$ time. The summation can also
be done in parallel in $O\left(\log t\right)$ time. Additionally,
since all operations are element-wise, they can be executed in parallel
with respect to the memory dimensions. Consequently, the total time
complexity is $O\left(\log t\right)$. Appendix Algo. \ref{algo}
illustrates an implementation supporting parallelization. 
\begin{rem}
In the Hadamard Memory Framework, with optimal parallel implementation,
the time complexity for processing a sequence of $t$ steps is $O\left(\log t\right)$.
By contrast, without parallelization, the time complexity is $O\left(tH^{2}\right)$.
\end{rem}

\subsection{Challenges on Memory Calibration }

The calibration matrix enables agents to either forget or enhance
past memories. However, it complicates learning due to the well-known
issues of gradient vanishing or exploding. This can be observed when
examining the policy gradient over $T$ steps, which reads:

\begin{align}
\nabla_{\Theta}J\left(\pi_{\Theta}\right) & =\mathbb{E}_{s,a\sim\pi_{\Theta}}\sum_{t=0}^{T}\underbrace{\nabla_{\Theta}\log\pi_{\Theta}\left(a_{t}|M_{t}\right)}_{G_{t}\left(\Theta\right)}Adv\left(s_{t},a_{t},\gamma\right)
\end{align}
where $\Theta$ is the set of parameters, containing $\left\{ \theta,\varphi\right\} $,
$Adv$ represents the advantage function, which integrates reward
information $R\left(s_{t},a_{t}\right)$, and $G_{t}\left(\Theta\right)=\frac{\partial\log\pi_{\Theta}\left(a_{t}|M_{t}\right)}{\partial M_{t}}\frac{\partial M_{t}}{\partial\Theta}$
captures information related to the memory. Considering main gradient
at step $t$, required to learn $\theta$ and $\varphi$:

\begin{align}
\frac{\partial M_{t}}{\partial\theta} & =M_{0}\frac{\partial\prod_{i=1}^{t}C_{\theta}(x_{i})}{\partial\theta}+\sum_{i=1}^{t}U_{\varphi}\left(x_{i}\right)\odot\underbrace{\frac{\partial\prod_{j=i+1}^{t}C_{\theta}(x_{j})}{\partial\theta}}_{G_{1}\left(i,t,\theta\right)};\label{eq:gradM}\\
\frac{\partial M_{t}}{\partial\varphi} & =\sum_{i=1}^{t}\frac{\partial U_{\varphi}\left(x_{i}\right)}{\partial\varphi}\odot\underbrace{\prod_{j=i+1}^{t}C_{\theta}(x_{j})}_{G_{2}\left(i,t,\theta\right)}
\end{align}
We collectively refer $G_{1}\left(i,t,\theta\right)$ and $G_{2}\left(i,t,\theta\right)$
as $G_{1,2}\left(i,t,\theta\right)\in\mathbb{R}^{H\times H}$. These
terms are critical as they capture the influence of state information
at timestep $i$ on the learning parameters $\theta$ and $\varphi$.
The training challenges arise from these two terms as the number of
timesteps $t$ increases: (i) \emph{Numerical Instability (Gradient
Exploding):} if $\exists m.k\in\left[1,H\right]\,\mathrm{s.t.}\,G_{1,2}\left(i,t,\theta\right)\left[m,k\right]\rightarrow\infty$,
this leads to overflow, causing the gradient to become ``nan'';
(ii) \emph{Learning Difficulty (Gradient Vanishing)}: if $t\gg i_{0}$,
$\left\Vert G_{1,2}\left(i,t,\theta\right)\right\Vert \approx0\,\,\forall i<i_{0}$,
meaning no signal from timesteps $i<i_{0}$ contributes to learning
the parameters. This is suboptimal, especially when important observations
occur early in the episode, and rewards are sparse and given at the
episode end, .i.e., $R\left(s_{t},a_{t}\right)=0\text{\ensuremath{\,\forall\,t\neq T}}$.

\emph{How to design the calibration matrix $C_{\theta}\left(x\right)$
to overcome the training challenges?} A common approach is to either
fix it as hyperparameters or make it learnable parameters independent
on the input $x_{t}$ (e.g., Eqs. \ref{eq:fw} and \ref{eq:ffm}).
Unfortunately, we can demonstrate that this leads to either numerical
instability or learning difficulties as formalized in Proposition
\ref{prop3}. In the next section, we will provide a better design
for the calibration matrix.
\begin{prop}
If calibration is enabled $C_{\theta}\left(x_{t}\right)\neq\boldsymbol{1}$,
yet the calibration matrix is fixed, independent of the input $x_{t}$
($\forall t:C_{\theta}\left(x_{t}\right)=\theta\in\mathbb{R}^{H\times H}$),
numerical instability or learning difficulty will arise.\label{prop3}
\end{prop}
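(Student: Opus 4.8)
The plan is to reduce the whole question to the asymptotics of scalar geometric sequences. Substituting the constant calibration $C_{\theta}(x_{t})\equiv\theta$ into the closed form of Eq.~\ref{eq:unroll}, the calibration product telescopes into an \emph{entrywise} power, $\prod_{j=i+1}^{t}C_{\theta}(x_{j})=\theta^{\odot(t-i)}$, whose $(m,k)$ entry is $\theta[m,k]^{\,t-i}$. Because this entry depends on $\theta$ only through the single coordinate $\theta[m,k]$, the gradient carrier $G_{2}(i,t,\theta)$ has entries $\theta[m,k]^{\,t-i}$, and the non-trivial part of $G_{1}(i,t,\theta)=\partial\theta^{\odot(t-i)}/\partial\theta$ has entries $(t-i)\,\theta[m,k]^{\,t-i-1}$. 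Writing $n:=t-i$ and $c:=\theta[m,k]$, every entry of $G_{1,2}$ is one of the scalar sequences $c^{n}$ or $n\,c^{n-1}$, so it remains to track these as $n\to\infty$.

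Next I would case-split on $\rho:=\max_{m,k}\lvert\theta[m,k]\rvert$. If $\rho>1$, pick an index pair attaining the maximum; then $\lvert G_{2}(i,t,\theta)[m,k]\rvert=\rho^{\,n}\to\infty$ and $\lvert G_{1}(i,t,\theta)[m,k]\rvert=n\,\rho^{\,n-1}\to\infty$, which is exactly the gradient-exploding condition, so numerical instability arises. If $\rho<1$, then for every index pair and every fixed $i$ we have $\lvert\theta[m,k]\rvert^{\,n}\to0$ and $n\,\lvert\theta[m,k]\rvert^{\,n-1}\to0$ (exponential decay dominates the linear factor), hence $\lVert G_{1,2}(i,t,\theta)\rVert\to0$; choosing any $i_{0}$ and letting $t\gg i_{0}$ gives the gradient-vanishing condition for all $i<i_{0}$, so learning difficulty arises. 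This settles every $\theta$ with $\rho\neq1$.

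The remaining case $\rho=1$ is the one needing care, and is where I expect the main obstacle to lie, because the two pathologies are defined asymmetrically --- instability needs a single diverging entry, vanishing needs the entire norm to collapse --- and at $\rho=1$ a fixed $\theta$ may be heterogeneous (some unit-modulus entries, some of modulus $<1$), so ``everything vanishes'' can fail. Here I would fall back on instability: $\rho=1$ is attained at some $(m,k)$ with $c\in\{+1,-1\}$, and then $\lvert G_{1}(i,t,\theta)[m,k]\rvert=n\,\lvert c\rvert^{\,n-1}=n\to\infty$, so the instability condition still holds. (The same linear divergence of $G_{1}$ occurs also at $\theta=\boldsymbol{1}$; that value is excluded from the statement only because it is the no-calibration regime that was set aside earlier on capacity grounds, not because it evades the gradient issue.) Combining the three cases $\rho>1$, $\rho<1$, $\rho=1$ shows that for any fixed input-independent $\theta$ --- in particular any $\theta\neq\boldsymbol{1}$ --- at least one of numerical instability or learning difficulty must arise as the horizon $t$ grows, which is the claim. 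I would close by noting that this dichotomy is precisely what motivates the next subsection: retaining active calibration while escaping both failure modes forces $C_{\theta}$ to depend on $x_{t}$ and the product $\prod_{j}C_{\theta}(x_{j})$ to be controlled directly (e.g.\ bounded in expectation) rather than left as a fixed geometric power.
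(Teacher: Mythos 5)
Your argument is correct and follows essentially the same route as the paper's: substitute the constant $\theta$ into the Hadamard closed form so that the calibration product becomes the entrywise power with $G_{2}(i,t,\theta)[m,k]=\theta[m,k]^{t-i}$ and $G_{1}(i,t,\theta)[m,k]=(t-i)\,\theta[m,k]^{t-i-1}$, then case-split on entry magnitudes ($\rho>1$ gives explosion, $\rho<1$ gives vanishing in the infinity norm). The one place you go beyond the paper is the boundary case $\rho=1$ with heterogeneous entries: the paper's proof stops at ``to avoid both problems one needs $\left|\theta[m,k]\right|=1$ for all $m,k$, which is not optimal in general,'' i.e.\ it dismisses that regime on expressiveness grounds rather than exhibiting a gradient pathology, whereas you close it by observing that $\left|G_{1}(i,t,\theta)[m,k]\right|=(t-i)\to\infty$ at any unit-modulus entry, which formally satisfies the paper's divergence criterion for numerical instability (though the divergence is only linear, so the practical ``overflow'' reading is weaker there than in the $\rho>1$ case). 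This makes your dichotomy airtight where the paper's is slightly informal, and your parenthetical that the same linear growth of $G_{1}$ already occurs at $\theta=\boldsymbol{1}$ is also correct and explains why that value must be excluded by hypothesis rather than by the gradient argument.
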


\begin{proof}
See Appendix \ref{subsec:Proposition-3}
\end{proof}

\subsection{Stable Hadamrad Memory (SHM)}

To avoid numerical and learning problems, it is important to ensure
each element of $C_{t}$ is not always greater than 1 or smaller than
1, which ends up in their product will be bounded such that $\text{\ensuremath{\mathbb{E}}}\left[\prod_{t=1}^{T}C_{t}\right]\neq\left\{ \boldsymbol{0},\boldsymbol{\infty}\right\} $
as $T\rightarrow\infty$. At the same time, we want $C_{t}$ to be
a function of $x_{t}$ to enable calibration conditioned on the current
context. To this end, we propose the calibration matrix:

\begin{equation}\label{eq:Ct}   
\tcbox[nobeforeafter]{$C_{\theta}\left(x_{t}\right)=1+\tanh\left(\theta_{t}\otimes v_{c}\left(x_{t}\right)\right)$} 
\end{equation}where $v_{c}:\mathbb{R}^{D}\mapsto\mathbb{R}^{H}$ is a mapping function,
and $\theta_{t}\in\mathbb{R}^{H}$ represents the main calibration
parameters. Here, we implement $v_{c}$ as a linear transformation
to map the input to memory space. Notably, the choice of $\theta_{t}$
determines the stability of the calibration. \emph{We propose to design
$\theta_{t}$ as trainable parameters that is randomly selected from
a set of parameters $\theta$}. In particular, given $\theta\in\mathbb{R}^{L\times H}$,
we sample uniformly a random row from $\theta$, $\theta_{t}=\theta\left[l_{t}\right]$
where $l_{t}\sim\mathcal{U}\left(1,L\right)$ where $L$ is the number
of possible $\theta_{t}$. We name the design as Stable Hadamard Memory
(SHM). Given the formulation, the range of an element $z_{t}^{m,k}=C_{\theta}\left(x_{t}\right)\left[m,k\right]$
is $\left[0,2\right]$ where $m,k\in\left[1,H\right]$. We can show
one important property of this choice is that $\text{\ensuremath{\mathbb{E}}}\left[\prod_{t=1}^{T}C_{t}\right]\approx\boldsymbol{1}$
under certain conditions. 
\begin{prop}
Assume: (i) $x_{t}\sim\mathcal{N}\left(0,\Sigma_{t}\right)$, (ii)
$v_{f}$ is a linear transformation, i.e.,$v_{c}\left(x_{t}\right)=Wx_{t}$,
(iii) $\left\{ z_{t}=\theta_{t}\otimes v_{c}\left(x_{t}\right)\right\} _{t=1}^{T}$
are independent across $t$. Given $C_{\theta}\left(x_{t}\right)$
defined in Eq. \ref{eq:Ct} then $\forall T\geq0,1\leq m,k\leq H$:

\[
\text{\ensuremath{\mathbb{E}}}\left[\prod_{t=1}^{T}z_{t}^{m,k}\right]=\text{\ensuremath{\mathbb{E}}}\left[\prod_{t=1}^{T}C_{\theta}\left(x_{t}\right)\left[m,k\right]\right]=1
\]
\end{prop}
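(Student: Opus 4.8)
The plan is to reduce the $T$-fold product to a single-step computation using the independence assumption (iii), and then to evaluate $\mathbb{E}[z_t^{m,k}]$ by an oddness/symmetry argument. First I would write the $(m,k)$ entry of the calibration matrix explicitly: since the outer product $\theta_t\otimes v_c(x_t)$ has $(m,k)$ entry $\theta_t[m]\,v_c(x_t)[k]$ and $v_c(x_t)=Wx_t$, we have
\[
z_t^{m,k}=C_\theta\left(x_t\right)[m,k]=1+\tanh\!\left(\theta_t[m]\,(Wx_t)_k\right).
\]
Because $|\tanh|\le 1$, every factor lies in $[0,2]$, so the product $\prod_{t=1}^T z_t^{m,k}$ is bounded and all expectations below are finite; in particular, for $T=0$ the empty product equals $1$ and there is nothing to prove.

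Next I would establish the key single-step identity $\mathbb{E}[z_t^{m,k}]=1$, which is equivalent to $\mathbb{E}\!\left[\tanh\!\left(\theta_t[m]\,(Wx_t)_k\right)\right]=0$. The idea is to condition on $\theta_t$ (the row index $l_t\sim\mathcal{U}(1,L)$ is drawn independently of $x_t$). Given $\theta_t[m]=c$, the scalar $c\,(Wx_t)_k=c\,w_k^\top x_t$ is a linear functional of the centered Gaussian vector $x_t\sim\mathcal{N}(0,\Sigma_t)$, hence itself a centered Gaussian $\mathcal{N}(0,\,c^2 w_k^\top\Sigma_t w_k)$, and therefore has a distribution symmetric about $0$ (this also covers the degenerate cases $c=0$ or $w_k^\top\Sigma_t w_k=0$, where the variable is the constant $0$). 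Since $\tanh$ is odd, $\tanh(c\,(Wx_t)_k)$ is then symmetric about $0$, so its conditional expectation vanishes: $\mathbb{E}\!\left[\tanh\!\left(\theta_t[m]\,(Wx_t)_k\right)\,\middle|\,\theta_t\right]=0$ almost surely. Taking the outer expectation over $\theta_t$ via the tower rule gives $\mathbb{E}\!\left[\tanh\!\left(\theta_t[m]\,(Wx_t)_k\right)\right]=0$, hence $\mathbb{E}[z_t^{m,k}]=1$.

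Finally I would invoke assumption (iii): the variables $\{z_t\}_{t=1}^T$ are independent across $t$, so the scalars $\{z_t^{m,k}\}_{t=1}^T$ are independent and the expectation of the product factorizes,
\[
\mathbb{E}\!\left[\prod_{t=1}^T z_t^{m,k}\right]=\prod_{t=1}^T\mathbb{E}\!\left[z_t^{m,k}\right]=\prod_{t=1}^T 1=1,
\]
which is exactly the claim.

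I expect the only delicate point to be the symmetry step: one must ensure that the randomness in $\theta_t$ is independent of $x_t$ so that the conditioning is legitimate, and that the conditional law of $c\,(Wx_t)_k$ is genuinely symmetric about $0$ — this is precisely where assumptions (i) (centered Gaussian inputs) and (ii) (linear $v_c$) enter. Everything else — the $[0,2]$ boundedness that guarantees integrability, the $T=0$ base case, and the factorization over $t$ — is routine given the stated hypotheses.
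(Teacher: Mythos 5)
Your proof is correct and follows essentially the same route as the paper's: both reduce the claim to $\mathbb{E}\left[\tanh\left(\theta_{t}\left[m\right]\,v_{c}\left(x_{t}\right)\left[k\right]\right)\right]=0$ via the symmetry of a centered Gaussian and the oddness of $\tanh$, then factorize the product using assumption (iii). The only difference is cosmetic -- you average over the categorical $\theta_{t}$ by conditioning and the tower rule, whereas the paper writes out the mixture density $f(z)=\frac{1}{L}\sum_{l}\frac{1}{|u_{l}|}f_{v}(z/u_{l})$ explicitly; your version has the minor advantage of handling the degenerate case $\theta_{t}\left[m\right]=0$ cleanly.
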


\begin{proof}
see Appendix \ref{subsec:Proposition-4}
\end{proof}
Assumption (i) is reasonable as $x_{t}$ can be treated as being drawn
from a Gaussian distribution, and LayerNorm can be applied to normalize
$x_{t}$ to have a mean of zero. Assumption (ii) can be realized as
we implement $v_{c}$ as a linear transformation. Assumption (iii)
is more restrictive because $\left\{ x_{t}\right\} _{t=1}^{T}$ are
often dependent in RL setting, which means $\left\{ z_{t}=\theta_{t}\otimes v_{c}\left(x_{t}\right)\right\} _{t=1}^{T}$
are not independent and thus, $\text{\ensuremath{\mathbb{E}}}\left[\prod_{t=1}^{T}z_{t}^{m,k}\right]\neq1$.
However, by reducing the linear dependence between $z_{t}$ through
random selection of $\theta_{t}$, we can make $\text{\ensuremath{\mathbb{E}}}\left[\prod_{t=1}^{T}z_{t}^{m,k}\right]$
closer to $1$, and thus being bounded. Specifically, we will prove
that by using $\theta_{t}=\theta\left[l_{t}\right];l_{t}\sim\mathcal{U}\left(1,L\right)$
the Pearson Correlation Coefficient between timesteps is minimized,
as stated in the following proposition:
\begin{prop}
Let $z_{t}^{m,k}=u_{t}^{m}v_{t}^{k}$ where $z_{t}^{m,k}=\left(\theta_{t}\otimes v_{c}\left(x_{t}\right)\right)\left[m,k\right]$,
$u_{t}^{m}=\theta_{t}\left[m\right]$ and $v_{t}^{k}=v_{c}\left(x_{t}\right)\left[k\right]$.
Given the Pearson correlation coefficient of two random variables
$X$ and $Y$ is defined as $\rho\left(X,Y\right)=\frac{\text{Cov}(X,Y)}{\sqrt{\text{Var}(X)}\sqrt{\text{Var}(Y)}}$,
then $\forall v_{t}^{k},v_{t'}^{k}$:
\[
\left|\rho\left(u_{t}^{m}v_{t}^{k},u_{t'}^{m}v_{t'}^{k}\right)\right|\leq\left|\rho\left(v_{t}^{k},v_{t'}^{k}\right)\right|
\]
The equality holds when $u_{t}^{m}=\beta u_{t'}^{m}$.\label{prop:Le5}
\end{prop}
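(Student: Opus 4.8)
The plan is to read the claim as an instance of \emph{correlation attenuation by multiplication with an independent factor}. Write $a=u_t^m$, $b=u_{t'}^m$, $X=v_t^k$, $Y=v_{t'}^k$. The two structural facts I would extract from the SHM design --- in which the row index $l_t\sim\mathcal{U}(1,L)$ is drawn independently of $x_t$ and independently across timesteps --- are: (a) $(a,b)$ is independent of $(X,Y)$, and (b) $a$ is independent of $b$. Given these, the whole argument is bookkeeping with first and second moments, and I would not assume anything about the joint law of $(X,Y)$, matching the ``$\forall v_t^k,v_{t'}^k$'' in the statement.

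First I would simplify the covariance. Using (a) and (b), $\mathbb{E}[aXbY]=\mathbb{E}[ab]\,\mathbb{E}[XY]=\mathbb{E}[a]\mathbb{E}[b]\,\mathbb{E}[XY]$ while $\mathbb{E}[aX]\mathbb{E}[bY]=\mathbb{E}[a]\mathbb{E}[b]\,\mathbb{E}[X]\mathbb{E}[Y]$, hence $\text{Cov}(aX,bY)=\mathbb{E}[a]\mathbb{E}[b]\,\text{Cov}(X,Y)$. Next, for the denominators I would expand $\text{Var}(aX)=\mathbb{E}[a^2]\mathbb{E}[X^2]-\mathbb{E}[a]^2\mathbb{E}[X]^2$ and record the elementary identity
\[
\text{Var}(aX)-\mathbb{E}[a]^2\,\text{Var}(X)=\big(\mathbb{E}[a^2]-\mathbb{E}[a]^2\big)\,\mathbb{E}[X^2]=\text{Var}(a)\,\mathbb{E}[X^2]\ \ge\ 0,
\]
so $\text{Var}(aX)\ge\mathbb{E}[a]^2\,\text{Var}(X)$, and symmetrically $\text{Var}(bY)\ge\mathbb{E}[b]^2\,\text{Var}(Y)$. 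If $\mathbb{E}[a]\mathbb{E}[b]=0$ the covariance is zero and the inequality is trivial; otherwise I substitute into $\rho^2$, and the factor $\mathbb{E}[a]^2\mathbb{E}[b]^2$ cancels:
\[
\rho\big(aX,bY\big)^2=\frac{\mathbb{E}[a]^2\mathbb{E}[b]^2\,\text{Cov}(X,Y)^2}{\text{Var}(aX)\,\text{Var}(bY)}\ \le\ \frac{\text{Cov}(X,Y)^2}{\text{Var}(X)\,\text{Var}(Y)}=\rho\big(v_t^k,v_{t'}^k\big)^2 .
\]
Taking square roots gives the claimed bound.

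For the equality case I would trace back the only inequality used: equality forces $\text{Var}(a)\,\mathbb{E}[X^2]=0$ and $\text{Var}(b)\,\mathbb{E}[Y^2]=0$. Discarding the degenerate situation where a data coordinate is identically zero, this means $\text{Var}(a)=\text{Var}(b)=0$, i.e.\ $u_t^m$ and $u_{t'}^m$ are (almost surely) constants; any two constants satisfy $u_t^m=\beta u_{t'}^m$ with $\beta=u_t^m/u_{t'}^m$, and conversely for deterministic factors $\rho(\beta u_{t'}^m X,\,u_{t'}^m Y)=\text{sgn}(\beta)\,\rho(X,Y)$, so the bound is tight exactly in that regime --- which is precisely the non-randomized setting ($L=1$, or $\theta_t$ shared across $t$) that SHM departs from. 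It is also worth stating the contrapositive quantitatively: whenever the $L$ sampled rows are not all equal in coordinate $m$, $\text{Var}(a)>0$, and the inequality is \emph{strict}, which is the content that motivates randomly selecting $\theta_t$.

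The computation is routine; the part needing care is the justification of (a) and (b) from the sampling scheme, and the explicit treatment of the edge cases $\mathbb{E}[a]\mathbb{E}[b]=0$ and degenerate $(X,Y)$, since these are where the inequality could otherwise be vacuous or the equality characterization could slip. If one wants to go beyond the specific SHM sampling and allow $a,b$ dependent, the covariance step picks up extra $\text{Cov}(a,b)$ terms and the clean cancellation breaks; I would therefore keep the statement anchored to the independent-selection design, where (b) holds by construction.
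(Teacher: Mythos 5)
Your proposal is correct, and it reaches the paper's bound by the same overall decomposition --- factoring $\mathrm{Cov}(u_t v_t, u_{t'}v_{t'})=\mathbb{E}[u_t]\mathbb{E}[u_{t'}]\,\mathrm{Cov}(v_t,v_{t'})$ --- but with a different key lemma for the denominator. The paper computes the variances exactly as $\mathrm{Var}(u_t v_t)=\mathbb{E}[u_t^2]\,\mathrm{Var}(v_t)$ (an identity that actually requires $\mathbb{E}[v_t]=0$, inherited from the Gaussian assumption of Proposition 4 but not restated), reduces the ratio of correlations to $\left|\mathbb{E}[u_t]\mathbb{E}[u_{t'}]\right|/\sqrt{\mathbb{E}[u_t^2]\mathbb{E}[u_{t'}^2]}$, and closes with Cauchy--Schwarz applied to $\mathbb{E}[u_t u_{t'}]$. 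You instead bound each variance from below via $\mathrm{Var}(aX)-\mathbb{E}[a]^2\mathrm{Var}(X)=\mathrm{Var}(a)\,\mathbb{E}[X^2]\ge 0$ and let the factor $\mathbb{E}[a]^2\mathbb{E}[b]^2$ cancel; this buys you two things: the zero-mean assumption on $v_t^k$ becomes unnecessary, and no Cauchy--Schwarz step is needed. Your equality analysis (forcing $\mathrm{Var}(u_t^m)=\mathrm{Var}(u_{t'}^m)=0$ outside degenerate cases) is consistent with the paper's ``$u_t^m=\beta u_{t'}^m$'' condition, since under the assumed independence of $u_t^m$ and $u_{t'}^m$ that condition also forces both to be almost-surely constant. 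Two further points in your favor: you correctly identify that what the covariance factorization really needs is joint independence of $(u_t^m,u_{t'}^m)$ from $(v_t^k,v_{t'}^k)$ together with $u_t^m\perp u_{t'}^m$ --- the paper lists only pairwise independencies, which are formally insufficient for the four-way moment to factor, and its intermediate covariance lines are garbled (they appear to assert $\mathbb{E}[u_tv_t\,u_{t'}v_{t'}]=\mathbb{E}[u_tv_t]\mathbb{E}[u_{t'}v_{t'}]$, which would make the covariance vanish); and you explicitly handle the edge cases $\mathbb{E}[u_t^m]\mathbb{E}[u_{t'}^m]=0$ and degenerate denominators, which the paper skips. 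No gaps.
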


\begin{proof}
See Appendix \ref{subsec:Proposition-5}
\end{proof}
As a result, our choice of $\theta_{t}$ outperforms other straightforward
designs for minimizing dependencies between timesteps. For instance,
a fixed $\theta_{t}$ ($\theta_{t}=\theta\in\mathbb{R}^{H}$) results
in higher dependencies because $\rho(\theta\left[m\right]v_{t}^{k},\theta\left[m\right]v_{t'}^{k})=\rho(v_{t}^{k},v_{t'}^{k})\geq\rho\left(u_{t}^{m}v_{t}^{k},u_{t'}^{m}v_{t'}^{k}\right)$.
In practice, even when $\text{\ensuremath{\mathbb{E}}}\left[\prod_{t=1}^{T}z_{t}^{m,k}\right]$
is bounded, the cumulative product can occasionally become very large
for certain episodes and timesteps, leading to overflow and disrupting
the learning process. This can be avoided by clipping the gradients.
In experiments, we implement SHM using nonparallel recursive form
(Eq. \ref{eq:linm}). The memory is then integrated into policy-gradient
RL algorithms to optimize Eq. \ref{eq:rlobj}, with the read value
$h_{t}$ (Eq. \ref{eq:hread}) used as input for value/policy networks.
Code will be available upon publication.

\section{Experimental Results}

We evaluate our method alongside notable memory-augmented agents in
POMDP environments. Unless stated otherwise, the context consists
of the observation and previous action. All training uses the same
hardware (single NVDIA H100 GPU), RL architecture, algorithm, training
protocol, and hyperparameters. We focus only on model-free RL algorithms.
The baselines differ only in their memory components: GRU \cite{chung2014empirical},
FWP \cite{schlag2021linear}, GPT-2 \cite{radford2019language}, S6
\cite{gu2023mamba}, mLSTM \cite{beck2024xlstm}, FFM \cite{morad2024reinforcement}
and SHM (Ours). For tasks with a clear goal, we measure performance
using the Success Rate, defined as the ratio of episodes that reach
the goal to the total number of evaluation episodes. For tasks in
POP-Gym, we use Episode Return as the evaluation metric. We fix SHM's
number of possible $\theta_{t}$, $L=128$, across experiments. 

\subsection{Sample-Efficient Meta Reinforcement Learning\label{subsec:Meta-Reinforcement-Learning}}

Meta-RL targets POMDPs where rewards and environmental dynamics differ
across episodes, representing various tasks \cite{schmidhuber1987evolutionary,thrun1998learning}.
To excel in all tasks, memory agents must learn general memory update
rules that can adapt to any environments. We enhanced the Wind and
Point Robot environments from \citet{ni2022recurrent} to increase
difficulty. In these environments, the observation consists of the
agent's 2D position $p_{t}$, while the goal state $p_{g}$ is hidden.
The agent takes continuous actions $a_{t}$ by moving with 2D velocity
vector. The sparse reward is defined as $R(p_{t+1},p_{g})=\boldsymbol{1}(\|p_{t+1}-p_{g}\|_{2}\leq r)$
where $r=0.01$ is the radius. In Wind, the goal is fix $p_{g}=\left[0,1\right]$,
yet there are noises in the dynamics: $\ensuremath{p_{t+1}=p_{t}+a_{t}+w},$
with the ``wind'' $w$ sampled from $U[-0.1,0.1]$ at the start
and fixed thereafter. In Point Robot, the goal varies across episodes,
sampled from $U[-10,10]$. To simulate real-world conditions where
the training tasks are limited, we create 2 modes using different
number of training and testing tasks: $\left[50,150\right]$ and $\left[10,190\right]$,
respectively. Following the modifications, these simple environments
become significantly more challenging to navigate toward the goal,
so we set the episode length to 100.

We incorporate the memory methods to the Soft Actor Critic (SAC, \cite{haarnoja2018soft}),
using the same code base introduced in \citet{ni2022recurrent}. We
keep the SHM model sizes and memory capacities small, at around 2MB
for the checkpoint and 512 memory elements, which is roughly equivalent
to a GRU (see Appendix \ref{subsec:Sample-Efficiency-in-Meta}). We
train all models for 500,000 environment steps, and report the learning
curves in Fig. \ref{fig:Visual-Match-and-1}. In the easy mode (50-150),
our SHM method consistently achieves the best performance, with a
near-optimal success rate, while other methods underperform by 20-50\%
on average in Wind and Point Robot, respectively. In the hard mode
(10-190), SHM continues to outperform other methods by approximately
20\%, showing earlier signs of learning. 

\begin{figure}
\begin{centering}
\includegraphics[width=1\textwidth]{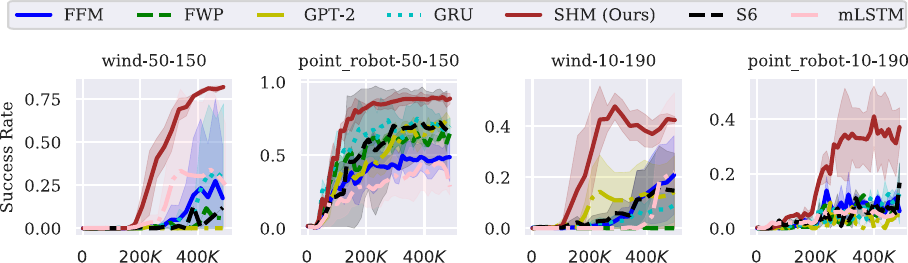}
\par\end{centering}
\caption{Meta-RL: Wind and Point Robot learning curves. Mean $\pm$ std. over
5 runs. \label{fig:Visual-Match-and-1} }
\end{figure}
\begin{figure}
\begin{centering}
\includegraphics[width=1\textwidth]{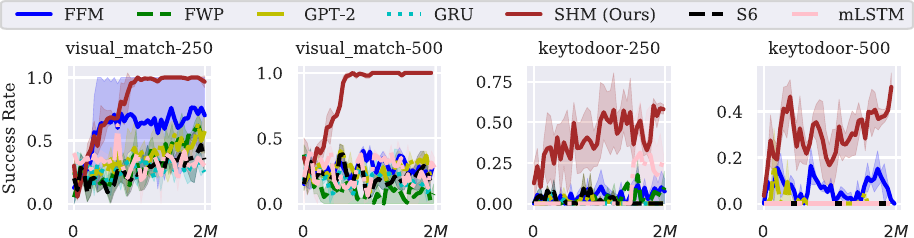}
\par\end{centering}
\caption{Credit Assignment: Visual Match, Key-to-Door learning curves. Mean
$\pm$ std. over 3 runs. \label{fig:Visual-Match-and} }
\end{figure}

\subsection{Long-term Credit Assignment}

In this task, we select the Visual Match and Key-to-Door environments,
the most challenge tasks mentioned in \citet{ni2024transformers}.
Both have observation as the local view of the agent, discrete actions
and sparse rewards dependent on the full trajectory, requiring long-term
memorization. In particular, the pixel-based Visual Match task features
an intricate reward system: in Phase 1, observing color codes yields
no rewards, while in Phase 2, picking an apple provides immediate
reward of one, relying on short-term memory. The final reward--a
bonus for reaching the door with the matching color code is set to
10. Key-to-Door also involves multiple phases: finding the key, picking
apples, and reaching the door. The terminal reward is given if the
key was acquired in the first phase and used to open the door. Both
tasks can be seen as decomposed episodic problems with noisy immediate
rewards, requiring that in the final phase, the agent remembers the
event in the first phase. We create 2 configurations using different
episode steps in the second phases: 250 and 500, respectively. 

Still following \citet{ni2022recurrent}, we use SAC-Discrete \cite{christodoulou2019soft}
as the RL algorithm. We use the same set of baselines as in Sec. \ref{subsec:Meta-Reinforcement-Learning}
and train them for 2 million environment steps. The results in Fig.
\ref{fig:Visual-Match-and} clearly show that, our method, SHM, significantly
outperforms all other methods in terms of success rate. Notably, SHM
is the only method that can perfectly solve both 250 and 500-step
Visual Match while the second-best performer, FFM, achieves only a
77\% and 25\% success rate, respectively. In Key-To-Door, our method
continues showing superior results with high success rate. By contrast,
no meaningful learning is observed from the other methods, which perform
similarly to GPT-2, as also noted by \citet{ni2024transformers}. 

\subsection{POPGym Hardest Games}

We evaluate SHM on the POPGym benchmark \cite{morad2023popgym}, the
largest POMDP benchmark to date. Following previous studies \cite{morad2023popgym,samsami2024mastering},
we focus on the most memory-intensive tasks: Autoencode, Battleship,
Concentration and RepeatPrevious. These tasks require ultra long-term
memorization, with complexity increasing across Easy, Medium, and
Hard levels. All tasks use categorical action and observation spaces,
allowing up to 1024 steps. 

For comparison, we evaluate SHM against \emph{state-of-the-art model-free
methods}, including GRU and FFM. Other memory models, such as DNC,
Transformers, FWP, and SSMs, have been reported to perform worse.
The examined memory models are integrated into PPO \cite{schulman2017proximal},
trained for 15 million steps using the same codebase as  \citet{morad2023popgym}
to ensure fairness. The models differ only in their memory, controlled
by the memory dimension hyperparameter $H$. We tune it for each baseline,
adjusting it to optimize performance, as larger $H$ values typically
improve results. The best-performing configurations are reported in
Table \ref{tab:PopGym:-Average-return}, where SHM demonstrates a
relative improvement of $\approx$10-12\% over FFM and GRU on average.
Notably, the learning curves in Appendix Fig. \ref{fig:POPGym-learning-curves:}
show that only SHM demonstrates signs of learning in several tasks,
including RepeatPrevious-Medium/Hard, and Autoencode-Easy/Medium.
Detailed hyperparameter setting and additional results are provided
in Appendix \ref{subsec:POPGym-Hardest-GamesDt}.

In terms of running time, the average batch inference time in milliseconds
for GRU, FFM, and SHM is 1.6, 1.8, and 1.9, respectively, leading
to a total of 7, 8, and 9 hours of training per task. While SHM is
slightly slower than GRU and FFM, the difference is a reasonable trade-off
for improved memory management in partially observable environments.
Last but not least, our model's runtime was measured using a non-parallel
implementation, while GRU benefits from hardware optimization in the
PyTorch library. SHM's running time could be further improved with
proper parallelization.

\begin{table}
\begin{centering}
\begin{tabular}{ccccc}
\hline 
\multirow{1}{*}{{\small{}Task}} & {\small{}Level} & \multicolumn{1}{c}{{\small{}GRU}} & \multicolumn{1}{c}{{\small{}FFM}} & \multicolumn{1}{c}{{\small{}SHM (Ours)}}\tabularnewline
\hline 
\multirow{3}{*}{{\small{}Autoencode}} & {\small{}Easy} & {\small{}-37.9$\pm$7.7} & {\small{}-32.7$\pm$0.6} & \textbf{\small{}49.5$\pm$23.3}\tabularnewline
 & {\small{}Medium} & {\small{}-43.6$\pm$3.5} & {\small{}-32.7$\pm$0.6} & \textbf{\small{}-28.8$\pm$14.4}\tabularnewline
 & {\small{}Hard} & {\small{}-48.1$\pm$0.7} & {\small{}-47.7$\pm$0.5} & \textbf{\small{}-43.9$\pm$0.9}\tabularnewline
\hline 
\multirow{3}{*}{{\small{}Battleship}} & {\small{}Easy} & {\small{}-41.1$\pm$1.0} & {\small{}-34.0$\pm$7.1} & \textbf{\small{}-12.3$\pm$2.4}\tabularnewline
 & {\small{}Medium} & {\small{}-39.4$\pm$0.5} & {\small{}-37.1$\pm$3.1} & \textbf{\small{}-16.8$\pm$0.6}\tabularnewline
 & {\small{}Hard} & {\small{}-38.5$\pm$0.5} & {\small{}-38.8$\pm$0.3} & \textbf{\small{}-21.2$\pm$2.3}\tabularnewline
\hline 
\multirow{3}{*}{{\small{}Concentration}} & {\small{}Easy} & {\small{}-10.9$\pm$1.0} & \textbf{\small{}10.7$\pm$1.2} & {\small{}-1.9$\pm$2.4}\tabularnewline
 & {\small{}Medium} & {\small{}-21.4$\pm$0.5} & {\small{}-24.7$\pm$0.1} & \textbf{\small{}-21.0$\pm$0.8}\tabularnewline
 & {\small{}Hard} & {\small{}-84.0$\pm$0.3} & {\small{}-87.5$\pm$0.5} & \textbf{\small{}-83.3$\pm$0.1}\tabularnewline
\hline 
\multirow{3}{*}{{\small{}RepeatPrevious}} & {\small{}Easy} & \textbf{\small{}99.9$\pm$0.0} & {\small{}98.4$\pm$0.3} & {\small{}88.9$\pm$11.1}\tabularnewline
 & {\small{}Medium} & {\small{}-34.7$\pm$1.7} & {\small{}-24.3$\pm$0.4} & \textbf{\small{}48.2$\pm$7.2}\tabularnewline
 & {\small{}Hard} & {\small{}-41.7$\pm$1.8} & {\small{}-33.9$\pm$1.0} & \textbf{\small{}-19.4$\pm$9.9}\tabularnewline
\hline 
{\small{}Average} & {\small{}All} & {\small{}-28.4$\pm$1.3} & {\small{}-24.2$\pm$1.2} & \textbf{\small{}-5.1$\pm$6.3}\tabularnewline
\hline 
\end{tabular}
\par\end{centering}
\caption{PopGym: Mean return $\pm$ std. ($\times100$) at the end of training
over 3 runs. The range of return ($\times100$) is $\left[-100,100\right]$.\label{tab:PopGym:-Average-return}}

\end{table}

\subsection{Model Analysis and Ablation Study\label{subsec:Ablation-Studies}}

\textbf{Choice of Calibration} Prop. \ref{prop:Le5} suggests that
selecting random $\theta_{t}\in\mathbb{R}^{H}$ in Eq. \ref{eq:Ct}
will reduce the dependencies between $C_{t}$, bringing $\prod_{t=1}^{T}C_{t}$
closer to $\boldsymbol{1}$ to avoid gradient issues. In this section,
we empirically verify that by comparing our proposed Random $\theta_{t}$
with the following designs: $C=\boldsymbol{1}$, no calibration is
used; \emph{Random $C$}, where a random calibration matrix is sampled
from normal distribution at each timestep, having $\text{\ensuremath{\mathbb{E}}}\left[\prod_{t=1}^{T}z_{t}^{m,k}\right]=1$
under mild assumptions, but preventing learning meaningful calibration
rules;\emph{ Fixed $C$}, a simpler method for learning calibration,
but prone to gradient problems (Prop. \ref{prop3}); \emph{Fixed $\theta_{t}$},
where we learn fixed parameter $\theta_{t}$, which is proven to be
less effective than Random $\theta_{t}$ in reducing linear dependencies
between timesteps (Prop. \ref{prop:Le5}); Neural $\theta_{t}$, where
$\theta_{t}=FFW\left(x_{t}\right)$, generated by a feedforward neural
network like mLSTM, but with no guarantee of reducing timestep dependencies.
We trained RL agents using the above designs of the calibration matrix
with $H=72$ on the Autoencode-Easy task and present the learning
curves in Fig. \ref{fig:(a)-Performance-of} (a, left). The results
show that our proposed Random $\theta_{t}$ outperforms the other
baselines by a substantial margin, with at least a 30\% improvement
in performance. This confirms the effectiveness of our calibration
design in enhancing the final results.

\begin{figure}
\begin{centering}
\includegraphics[width=1\textwidth]{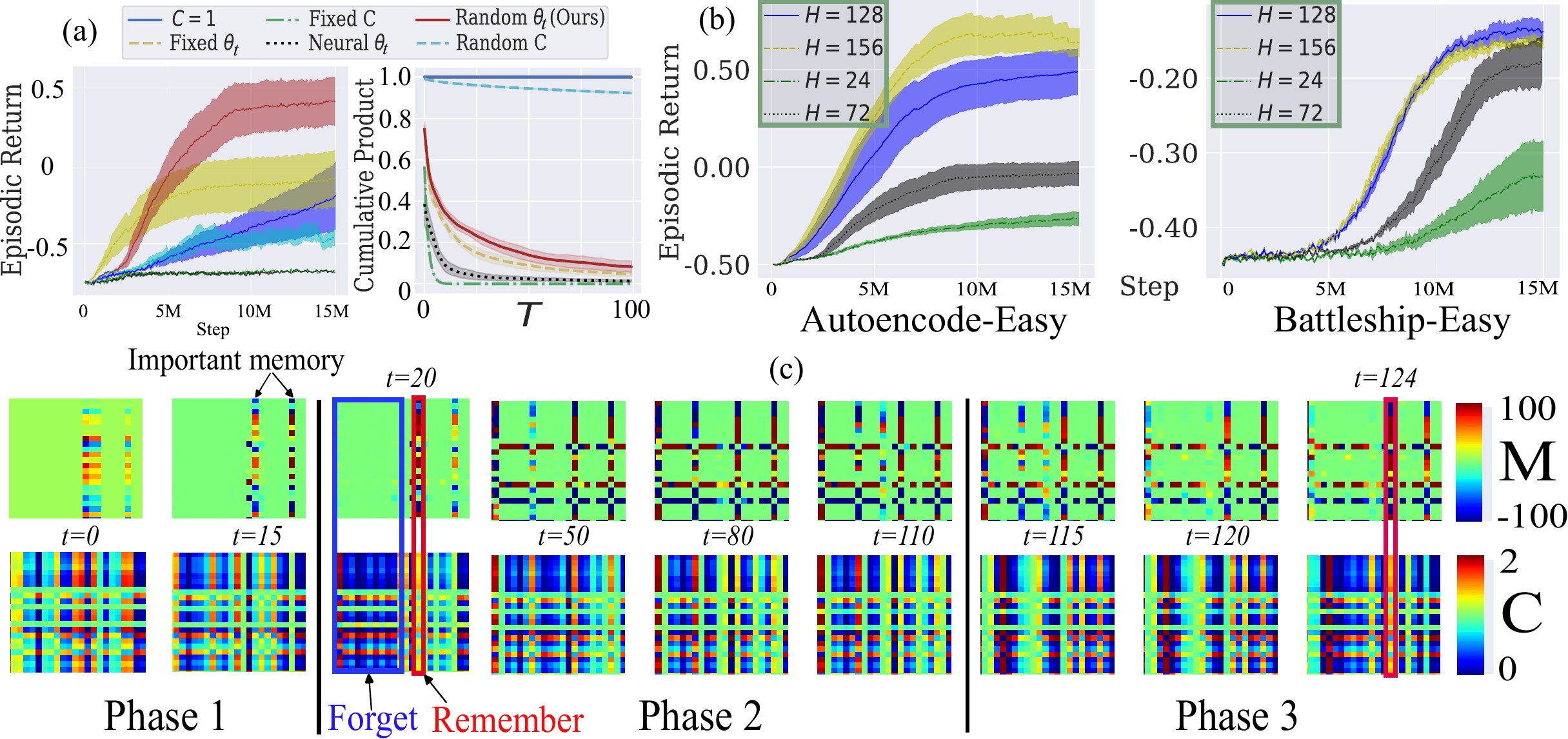}
\par\end{centering}
\caption{(a) Left: Return of calibration designs over 3 runs; Right: Calibration
matrix cumulative product over 100 episodes. (b) Return of memory
sizes $H$ on Autoencode-Easy (left) and Battleship-Easy (right).
(c) Memory ($M$, top) and calibration ($C$, bottom) matrices over
timesteps in Visual Match: SHM erases memory that are no longer required
and strengthens the important ones. \label{fig:(a)-Performance-of}}
\end{figure}
\textbf{Vanishing Behavior} In practice, exploding gradients can be
mitigated by clipping. Thus, we focus on the vanishing gradient, which
depends on the cumulative product $\mathcal{C}_{j}$ $=\prod_{t=1}^{j}C_{t}$.
Our theory suggests that Random $\theta_{t}$ should be less susceptible
to the vanishing phenomenon compared to other methods such as Fixed
$C$, Fixed $\theta_{t}$ and Neural $\theta_{t}$. To verify that,
for each episode, we compute the average value of elements in the
matrix $\mathcal{C}_{j}$ that are smaller than 1 ($\overline{\mathcal{C}_{j}}\left[<1\right]$),
as those larger than 1 are prone to exploding and are not appropriate
for averaging with the vanishing values. We plot $\overline{\mathcal{C}_{j}}\left[<1\right]$
for $j=1,2,...100$ over 100 episodes in Fig \ref{fig:(a)-Performance-of}
(a, right). 

The results are consistent with theoretical predictions: Fixed $C$
leads to rapid vanishing of the cumulative product in just 10 steps.
Neural $\theta_{t}$ also perform badly, likely due to more complex
dependencies between timesteps because $z_{t}^{m,k}$ now becomes
non-linear function of $x_{t}$, causing $\prod_{t=1}^{T}z_{t}^{m,k}$
to deviate further from 1. While Fixed $\theta_{t}$ is better than
Neural $\theta_{t}$, it still exhibits quicker vanishing compared
to our approach Random $\theta_{t}$. As expected, Random $C$ shows
little vanishing, but like setting $C=1$, it fails to leverage memory
calibration, resulting in underperformance (Fig. \ref{fig:(a)-Performance-of}
(a, left)). Random $\theta_{t}$, although its $\overline{\mathcal{C}_{j}}$
also deviates from $1$, shows the smallest deviation among the calibration
learning approaches. Additionally, the vanishing remain manageable
after 100 timesteps, allowing gradients to still propagate effectively
and enabling the calibration parameters to be learned.

\textbf{Memory Size} The primary hyperparameter of our method is $H$,
determining the memory capacity. We test SHM with $H\in\left\{ 24,72,128,156\right\} $
on the Autoencode-Easy and Battleship-Easy tasks. Fig. \ref{fig:(a)-Performance-of}
(b) shows that larger memory generally results in better performance.
In terms of speed, the average batch inference times (in milliseconds)
for different $H$ values are 1.7, 1.8, 1.9, and 2.1, respectively.
We choose $H=128$ for other POPGym tasks to balance performance and
speed. 

\textbf{Forgetting and Remembering }Here, we investigate the learned
calibration strategy of SHM on Visual Match with 15,100, and 10 steps
in Phase 1, 2 and, 3, respectively. We sample several representative
$M_{t}$ and $C_{t}$ from 3 phases and visualize them in Fig. \ref{fig:(a)-Performance-of}
(c). In Phase 1, the agent identifies the color code and stores it
in memory, possibly in two columns of $M$, marked as ``important
memory''. In Phase 2, unimportant memory elements are erased where
$C_{t}\approx0$ (e.g., those within the blue rectangle). However,
important experiences related to the Phase 1's code are preserved
across timesteps until Phase 3 (e.g., those within the red rectangle
where $C_{t}\gtrsim$1), which is desirable.

\section{Related works}

Numerous efforts have been made to equip RL agents with memory mechanisms.
Two main research directions focus on inter-episode and intra-episode
memory approaches. While episodic control with a global, tabular memory
enhances sample efficiency by storing experiences across episodes
\cite{blundell2016model,le2021model,le2022episodic}, it falls short
in recalling specific events within individual episodes. Similarly,
global memory mechanisms can support exploration or optimization \cite{badia2020never,le2022learning,10.5555/3635637.3662964},
but are not designed to address the challenges of memorization within
a single episode. In contrast, to address the challenges of partially
observable Markov decision processes (POMDPs), RL agents often leverage
differentiable memory, which is designed to capture the sequence of
observations within an episode, and can be learned by policy gradient
algorithms \cite{wayne2018unsupervised}.

Differentiable memory models can be broadly categorized into vector-based
and matrix-based approaches. Vector-based memory, like RNNs \cite{elman1990finding},
processes inputs sequentially and stores past inputs in their hidden
states. While RNNs are slower to train, they are efficient during
inference. Advanced variants, such as GRU and LSTM, have shown strong
performance in POMDPs, often outperforming more complex RL methods
\cite{ni2022recurrent,morad2023popgym}. Recently, faster alternatives
like convolutional and structured state space models (SSM) have gained
attention \cite{bai2018empirical,gu2020hippo}, though their effectiveness
in RL is still under exploration. Initial attempts with models like
S4 underperformed in POMDP tasks \cite{morad2023popgym}, but improved
SSM versions using S5, S6 or world models have shown promise \cite{lu2024structured,gu2023mamba,samsami2024mastering}.
Despite these advancements, vector-based memory is limited, as compressing
history into a single vector makes it challenging to scale for high-dimensional
memory space.

Matrix-based memory, on the other hand, offers higher capacity by
storing history in a matrix but at the cost of increased complexity.
Attention-based models, such as Transformers \cite{Vaswani2017AttentionIA},
have largely replaced RNNs in SL, also delivering good results in
standard POMDPs \cite{parisotto2020stabilizing}. However, their quadratic
memory requirements limit their use in environments with long episodes.
Empirical studies have also shown that Transformers struggle with
long-term memorization and credit assignment tasks \cite{ni2024transformers}.
While classic memory-augmented neural networks (MANNs) demonstrated
good performance in well-crafted long-term settings \cite{graves2016hybrid,hung2019optimizing,le2020self},
they are slow and do not scale well in larger benchmarks like POPGym
\cite{morad2023popgym}. New variants of LSTM \cite{beck2024xlstm},
including those based on matrices, have not been tested in reinforcement
learning settings and lack theoretical grounding to ensure stability.

Simplified matrix memory models \cite{katharopoulos2020transformers,schlag2021linear},
offer scalable solutions but have underperformed compared to simple
RNNs in the POPGym benchmark, highlighting the challenges of designing
efficient matrix memory for POMDPs. Recently, Fast and Forgetful Memory
(FFM, \cite{morad2024reinforcement}), incorporating inductive biases
from neuroscience, has demonstrated better average results than RNNs
in the benchmark. However, in the most memory-intensive environments,
the improvement remains limited. Compared to our approach, these matrix-based
memory methods lack a flexible memory calibration mechanism and do
not have robust safeguards to prevent numerical and learning issues
in extremely long episodes.

\section{Discussion}

In this paper, we introduced the Stable Hadamard Framework (SHF) and
its effective instance, the Stable Hadamard Memory (SHM), a novel
memory model designed to tackle the challenges of dynamic memory management
in partially observable environments. By utilizing the Hadamard product
for memory calibration and update, SHM provides an efficient and theoretically
grounded mechanism for selectively erasing and reinforcing memories
based on relevance. Our experiments on the POPGym and POMDP benchmarks
demonstrate that SHM significantly outperforms state-of-the-art memory-based
models, particularly in long-term memory tasks, while being fast to
execute. Although our theory suggests that SHM should be more stable
and mitigate gradient learning issues by reducing linear dependencies
between timesteps, this stability is not guaranteed to be perfect.
Further theoretical investigation is needed to validate and refine
these properties in future work.

\bibliographystyle{plainnat}
\bibliography{pma}

\cleardoublepage\newpage{}

\section*{Appendix}

\renewcommand\thesubsection{\Alph{subsection}}

\subsection{Theoretical Results}

\subsubsection{Closed-form Memory Update \label{subsec:Close-form-Memory-Update}}

Given 
\begin{equation}
M_{t}=M_{t-1}\odot C_{t}+U_{t}\label{eq:recursive}
\end{equation}
then

\begin{equation}
M_{t}=M_{0}\prod_{i=1}^{t}C_{i}+\sum_{i=1}^{t}U_{i}\odot\prod_{j=i+1}^{t}C_{j}\label{eq:unroll-1}
\end{equation}

\begin{proof}
We prove by induction. 

\textbf{Base case:} for $t=1$, the equation becomes for both Eqs.
\ref{eq:recursive} and \ref{eq:unroll-1}:
\[
M_{1}=M_{0}\cdot C_{1}+U_{1}
\]
 Thus, the equation holds for $t=1$.

\textbf{Inductive hypothesis:} Assume the equation holds for $t=n$:
\[
M_{n}=M_{0}\prod_{i=1}^{n}C_{i}+\sum_{i=1}^{n}U_{i}\odot\prod_{j=i+1}^{n}C_{j}
\]

\textbf{Inductive step:} We now prove the equation holds for $t=n+1$.
From the update rule in Eq. \ref{eq:recursive}: 
\[
M_{n+1}=M_{n}\odot C_{n+1}+U_{n+1}
\]
Substitute$M_{n}$ using the inductive hypothesis:
\begin{align*}
M_{n+1} & =\left(M_{0}\prod_{i=1}^{n}C_{i}+\sum_{i=1}^{n}U_{i}\odot\prod_{j=i+1}^{n}C_{j}\right)\odot C_{n+1}+U_{n+1}\\
 & =M_{0}\prod_{i=1}^{n+1}C_{i}+\sum_{i=1}^{n}U_{i}\odot\prod_{j=i+1}^{n+1}C_{j}+U_{n+1}\\
 & =M_{0}\prod_{i=1}^{n+1}C_{i}+\sum_{i=1}^{n+1}U_{i}\odot\prod_{j=i+1}^{n+1}C_{j}
\end{align*}
This matches the form of the closed-form Eq. \ref{eq:unroll-1} for
$t=n+1$, completing the proof by induction.
\end{proof}

\subsubsection{Proposition 3\label{subsec:Proposition-3}}
\begin{defn}
\textbf{Critical Memory Gradients}: In the Hadamard Memory Framework,
we define the critical memory gradients of memory rules as follows:
\end{defn}

\[
G_{1}\left(i,t,\theta\right)=\frac{\partial\prod_{j=i+1}^{t}C_{\theta}(x_{j})}{\partial\theta}=\sum_{j=i+1}^{t}\frac{\partial C_{\theta}\left(x_{j}\right)}{\partial\theta}\odot\prod_{k=i+1,k\neq j}^{t}C_{\theta}(x_{k})
\]

\[
G_{2}\left(i,t,\theta\right)=\prod_{j=i+1}^{t}C_{\theta}(x_{j})
\]
Now we can proceed to the proof for Proposition \ref{prop3}. 
\begin{proof}
In the case $C_{\theta}\left(x_{t}\right)=\theta\in\mathbb{R}^{H\times H}$,
the critical gradients read:
\begin{align*}
G_{1}\left(i,t,\theta\right) & =\sum_{j=i+1}^{t}\prod_{k=i+1,k\neq j}^{t}\theta\\
 & =\left(t-i\right)\theta^{t-i-1}
\end{align*}
\[
G_{2}\left(i,t,\theta\right)=\theta^{t-i}
\]
If $\exists\,0\leq m,k<H\,\mathrm{s.t.}\,\left|\theta\left[m,k\right]\right|>1$,
$G_{1,2}\left(i,t,\theta\right)\left[m,k\right]\sim o\left(\theta\left[m,k\right]^{t-i-1}\right)$,
and thus $\rightarrow\infty$, i.e., numerical problem arises. Let
$\left\Vert \cdot\right\Vert $ denote the infinity norm, we also
have:
\[
\left\Vert G_{1}\left(i,t,\theta\right)\right\Vert =\left\Vert \left(t-i\right)\theta^{t-i-1}\right\Vert \leq\left(t-i\right)\left\Vert \theta\right\Vert ^{t-i-1}
\]
\[
\left\Vert G_{2}\left(i,t,\theta\right)\right\Vert =\left\Vert \theta^{t-i}\right\Vert \leq\left\Vert \theta\right\Vert ^{t-i}
\]
Note that if $\forall m,k\left|\theta\left[m,k\right]\right|<1$ ,
$\left\Vert \theta\right\Vert <1$, both terms become 0 as $t-i$
increases, thus learning problem always arises. In conclusion, in
this case, to avoid both numerical and learning problems, $\forall m,k\left|\theta\left[m,k\right]\right|=1$,
which is not optimal in general.
\end{proof}

\subsubsection{Proposition 4\label{subsec:Proposition-4}}
\begin{proof}
Let $z_{t}^{m,k}=u_{t}^{m}v_{t}^{k}$ where $z_{t}^{m,k}=\left(\theta_{t}\otimes v_{c}\left(x_{t}\right)\right)\left[m,k\right]$,
$u_{t}^{m}=\theta_{t}\left[m\right]$ and $v_{t}^{k}=v_{c}\left(x_{t}\right)\left[k\right]$.
Using assumption (i) and (ii), $v_{t}^{k}$ is a Gaussian variable,
i.e., $v_{t}^{k}\sim\mathcal{N}\left(0,\mu_{t}^{k}\right)$. By definition,
$u_{t}^{m}$ is a categorical random variable that can take values
$\left\{ \theta\left[m,l_{t}\right]\right\} _{l_{t}=1}^{L}$ with
equal probability $1/L$. For now, we drop the subscripts $m$, $k$
and $t$ for notation ease. The PDF of $z=uv$ can be expressed as
a mixture distribution since $u$ is categorical and can take discrete
values $\left\{ u_{l}\right\} _{l=1}^{L}$. The PDF of $z$, denoted
as $f\left(z\right)$, is given by:

\begin{align*}
f(z) & =\sum_{l=1}^{L}P(u=u_{l})f_{u_{l}v}(z)\\
 & =\frac{1}{L}\sum_{l=1}^{L}f_{u_{l}v}(z)
\end{align*}
where $f_{u_{l}v}(z)$ is the PDF of $u_{l}v$, and $u_{l}$ is a
constant for each $l$. Thus, $f_{u_{l}v}(z)$ is the scaled PDF of
$u$:

\[
f_{u_{l}v}(z)=\frac{1}{|u_{l}|}f_{v}\left(\frac{z}{u_{l}}\right)
\]
Since $v\sim\mathcal{N}\left(0,\mu\right)$, the PDF of $v$, denoted
as $f_{v}(x)$, is symmetric about 0, we have:

\[
f_{u_{l}v}(z)=\frac{1}{|u_{l}|}f_{v}\left(\frac{z}{u_{l}}\right)=\frac{1}{|u_{l}|}f_{v}\left(\frac{-z}{u_{l}}\right)=f_{u_{l}v}(-z).
\]
This shows that $f_{u_{l}v}(z)$ is symmetric around 0 for each $l$.
Therefore, the PDF $f(z)$ is also symmetric:

\[
f(z)=\frac{1}{L}\sum_{l=1}^{L}f_{u_{l}v}(z)=\frac{1}{L}\sum_{l=1}^{L}f_{u_{l}v}(-z)=f(-z)
\]
Since $\tanh$ is an odd function and the PDF of $z^{m,k}$ is symmetric
about 0, $\mathbb{E}\left[\tanh\left(z^{m,k}\right)\right]=0$ and
thus $\mathbb{E}\left[1+\tanh\left(z^{m,k}\right)\right]=1$. Finally,
using assumption (iii), $\text{\ensuremath{\mathbb{E}}}\left[\prod_{t=1}^{T}z_{t}^{m,k}\right]=\prod_{t=1}^{T}\text{\ensuremath{\mathbb{E}}}\left[z_{t}^{m,k}\right]=1$.
\end{proof}

\subsubsection{Proposition 5\label{subsec:Proposition-5}}
\begin{proof}
Without loss of generalization, we can drop the indice $m$ and $k$
for notation ease. Since $\theta_{t}=\theta\left[l_{t}\right];l_{t}\sim\mathcal{U}\left(1,L\right)$,
it is reasonable to assume that each of $\left\{ u_{t},u_{t'}\right\} $,
$\left\{ u_{t},v_{t}\right\} $, $\left\{ u_{t},v_{t'}\right\} $,
$\left\{ u_{t'},v_{t}\right\} $, $\left\{ u_{t'},v_{t'}\right\} $
are independent. In this case, let us denote $X=u_{t}v_{t}$ and $X'=u_{t'}v_{t'}$,
we have

\begin{align*}
\text{Cov}(X,X') & =\text{Cov}(u_{t}v_{t},u_{t'}v_{t'})\\
 & =\mathbb{E}[u_{t}v_{t}\cdot u_{t'}v_{t'}]-\mathbb{E}[u_{t}v_{t}]\cdot\mathbb{E}[u_{t'}v_{t'}]\\
 & =\mathbb{E}(u_{t}v_{t})\mathbb{E}(u_{t'}v_{t'})-\mathbb{E}(u_{t})\mathbb{E}(v_{t})\mathbb{E}(u_{t'})\mathbb{E}(v_{t'})\\
 & =\mathbb{E}(u_{t})\mathbb{E}(v_{t})[\mathbb{E}(u_{t'}v_{t'})-\mathbb{E}(u_{t'})\mathbb{E}(v_{t'})]\\
 & =\mathbb{E}[u_{t}]\mathbb{E}[u_{t'}]\text{Cov}(v_{t},v_{t'})
\end{align*}
The variances read:

\[
\text{Var}(X)=\text{Var}(u_{t}v_{t})=\mathbb{E}[u_{t}^{2}v_{t}^{2}]-\mathbb{E}[u_{t}v_{t}]^{2}=\mathbb{E}[u_{t}^{2}]\left(\mathbb{E}[v_{t}^{2}]-\mathbb{E}[v_{t}]^{2}\right)=\mathbb{E}[u_{t}^{2}]\cdot\text{Var}(v_{t})
\]
Similarly:

\[
\text{Var}(X')=\text{Var}(u_{t'}v_{t'})=\mathbb{E}[u_{t'}^{2}]\cdot\text{Var}(v_{t'})
\]
Given $\rho$ as the Pearson Correlation Coefficient, consider the
ratio:

\begin{align*}
\frac{\left|\rho(X,X')\right|}{\left|\rho(v_{t},v_{t'})\right|} & =\frac{\frac{\left|\mathbb{E}[u_{t}]\mathbb{E}[u_{t'}]\text{Cov}(v_{t},v_{t'})\right|}{\sqrt{\mathbb{E}[u_{t}^{2}]\cdot\text{Var}(v_{t})\cdot\mathbb{E}[u_{t'}^{2}]\cdot\text{Var}(v_{t'})}}}{\frac{\left|\text{Cov}(v_{t},v_{t'})\right|}{\sqrt{\text{Var}(v_{t})\cdot\text{Var}(v_{t'})}}}\\
 & =\frac{\left|\mathbb{E}[u_{t}]\mathbb{E}[u_{t'}]\right|}{\sqrt{\mathbb{E}[u_{t}^{2}]\mathbb{E}[u_{t'}^{2}]}}
\end{align*}
By the independence of $u_{t}$ and $u_{t'}$ and the Cauchy-Schwarz
inequality:

\[
\left|\mathbb{E}[u_{t}]\mathbb{E}[u_{t'}]\right|=\left|\mathbb{E}[u_{t}u_{t'}]\right|\leq\sqrt{\mathbb{E}[u_{t}^{2}]\mathbb{E}[u_{t'}^{2}]},
\]
which implies

\[
\frac{\left|\rho(u_{t}v_{t},u_{t'}v_{t'})\right|}{\left|\rho(v_{t},v_{t'})\right|}\leq1\Longleftrightarrow\left|\rho\left(u_{t}v_{t},u_{t'}v_{t'}\right)\right|\leq\left|\rho\left(v_{t},v_{t'}\right)\right|
\]
The equality holds when $u_{t}=\beta u_{t'}$.
\end{proof}

\subsection{Details on Methodology\label{subsec:Details-on-Methodology}}

In this section, we describe RL frameworks used across experiments,
which are adopted exactly from the provided benchmark. Table \ref{tab:PGs-used-in}
summarizes the main configurations. Further details can be found in
the benchmark papers \cite{ni2022recurrent,morad2023popgym}. 

\begin{table*}
\begin{centering}
\begin{tabular}{ccccc}
\hline 
Task & Input type & Policy/Value networks & RL algorithm & Batch size\tabularnewline
\hline 
\multirow{2}{*}{Meta-RL} & \multirow{2}{*}{Vector} & 3-layer FFW + 1 Memory & \multirow{2}{*}{SAC} & \multirow{2}{*}{32}\tabularnewline
 &  & layer $(128,128,128,H)$ &  & \tabularnewline
\hline 
Credit  & \multirow{2}{*}{Image} & 2-layer CNN + 2-layer FFW & \multirow{2}{*}{SAC-D} & \multirow{2}{*}{32}\tabularnewline
\multirow{1}{*}{Assignment} &  & + 1 Memory layer $(128,128,H)$ &  & \tabularnewline
\hline 
\multirow{2}{*}{POPGym} & \multirow{2}{*}{Vector} & 3-layer FFW + 1 Memory & \multirow{2}{*}{PPO} & \multirow{2}{*}{65,536}\tabularnewline
 &  & layer $(128,64,H,64)$ &  & \tabularnewline
\hline 
\end{tabular}
\par\end{centering}
\caption{Network architecture shared across memory baselines. \label{tab:PGs-used-in}}
\end{table*}
\begin{algorithm}[t]      
\SetAlgoLined      
\SetKwInput{KwInput}{Input}  
\SetKwInput{KwOP}{Operator}  
\SetKwInput{KwOutput}{Ouput}  
\SetKwComment{Comment}{/* }{ */} \KwInput{$M_0\in \mathbb{R}^{B\times H\times H}$, $C\in \mathbb{R}^{B\times T\times H\times H}$, $U\in \mathbb{R}^{B\times T\times H\times H}$} \KwOP{$\oplus \ \textrm{parallel prefix sum}, \otimes \  \textrm{parallel prefix product}, \odot \ \textrm{Hadamard product}$} 
\KwOutput{$M=\{M\}_{t=1}^n\in \mathbb{R}^{B\times T\times H\times H}$}

\tcc{Parallel prefix product along T. Complexity: $O(\log(t))$.}
$C_p = \otimes(C, \textrm{dim}=1)$ \\ 
\tcc{Concatenation along T. Complexity: $O(1)$.}
$D = \textrm{concat}(\left[M_0, C\right],\textrm{dim}=1)$ \\ 
\tcc{Parallel prefix product along T. Complexity: $O(\log(t))$.}
$D_p = \otimes (D, \textrm{dim}=1)$ \\ 
\tcc{Parallel Hadamard product ($C_p\neq 0$).  Complexity: $O(1)$.}
$E=U\odot \frac{1}{C_p}$ \\
\tcc{Parallel prefix sum along T.  Complexity: $O(\log(t))$.}
$E_p=\oplus (E, \textrm{dim}=1)$ \\
\tcc{Parallel sum.  Complexity: $O(1)$.}
$M=D_p[:,1:]+E_p$ \\

\caption{Theoretical Parallel Hadamard Memory Framework.\label{algo}}  
\end{algorithm} 

\subsection{Details of Experiments\label{subsec:Details-of-Experiments}}

We adopt public benchmark repositories to conduct our experiments.
The detail is given in Table \ref{tab:Benchmark-repositories-used}

\begin{table}
\begin{centering}
\begin{tabular}{ccc}
\hline 
Task & URL & License\tabularnewline
\hline 
Meta-RL & \url{https://github.com/twni2016/pomdp-baselines} & MIT\tabularnewline
Credit Assignment & \url{https://github.com/twni2016/Memory-RL} & MIT\tabularnewline
POP-Gym & \url{https://github.com/proroklab/popgym} & MIT\tabularnewline
\hline 
\end{tabular}
\par\end{centering}
\caption{Benchmark repositories used in our paper.\label{tab:Benchmark-repositories-used} }
\end{table}

\subsubsection{Sample-Efficiency in Meta Reinforcement Learning and Long-term Credit
Assignment Details \label{subsec:Sample-Efficiency-in-Meta}}

We report the choice of memory hyperparameter $H$ in Table. \ref{tab:Memory-dimension-for}.
Due to the lack of efficient parallel processing in the codebase from
\citet{ni2022recurrent}, running experiments with larger memory sizes
is prohibitively slow. As a result, we were only able to test models
with 512 memory elements, limiting the potential performance of our
SHM. This constraint contrasts with the POPGym benchmark with better
parallel processing, where our method scales comfortably with larger
memory sizes, as demonstrated later in \ref{subsec:POPGym-Hardest-GamesDt}. 

\begin{table}
\begin{centering}
\begin{tabular}{cccccccc}
\hline 
Model & GRU & FWP & GPT-2$^{*}$ & S6 & mLSTM & FFM & SHM (Ours)\tabularnewline
\hline 
$H$ & 512 & 24 & 512 & 512 & 24 & 128 & 24\tabularnewline
Memory elements & 512 & 512 & $512\times T$ & 512 & 512 & 512 & 512\tabularnewline
\hline 
\end{tabular}
\par\end{centering}
\caption{Memory dimension for Meta-RL and Credit Assignment tasks. GRU and
S6 use vector memory of size $H$. GPT-2 does not have a fixed size
memory and attend to all previous $T$ timesteps in the episode. $H=512$
is the dimension of Transformer's hidden layer. FFM's memory shape
is $2\times m\times c$ where $c=4$, $H=m=128$. FWP and SHM's memory
shape is $H\times H$. \label{tab:Memory-dimension-for}}

\end{table}

\subsubsection{POPGym Hardest Games Details\label{subsec:POPGym-Hardest-GamesDt}}

In this experiment, we tuned the memory model size to ensure optimal
performance. Specifically, for GRU, we tested hidden sizes of 256,
512, 1024, and 2048 on the Autoencode-Easy task. Increasing GRU's
hidden size did not lead to performance gains but resulted in a significant
rise in computation cost. For instance, at $H=2048$, the average
batch inference time was 30 milliseconds, compared to 1.6 milliseconds
at $H=256$. Thus, we set GRU's hidden size to 256, as recommended
by the POPGym documentation. For FFM, we set the context size to $c=4$,
as the authors suggest that a larger context size degrades performance.
We tuned the trace size $m\in\left\{ 32,128,256,512\right\} $ on
the Autoencode-Easy task and found that $m=128$ provided slightly
better results, so we used this value for all experiments. For our
method, we tested $H\in\left\{ 24,72,128,156\right\} $ on the same
task and observed that larger values of $H$ led to better performance,
as shown in the ablation study in Sec. \ref{subsec:Ablation-Studies}.
However, to balance runtime, memory cost, and performance, we set
$H=128$ for all POPGym experiments. The learning curves are given
in Fig. \ref{fig:POPGym-learning-curves:}.

\begin{figure}
\begin{centering}
\includegraphics[width=1\textwidth]{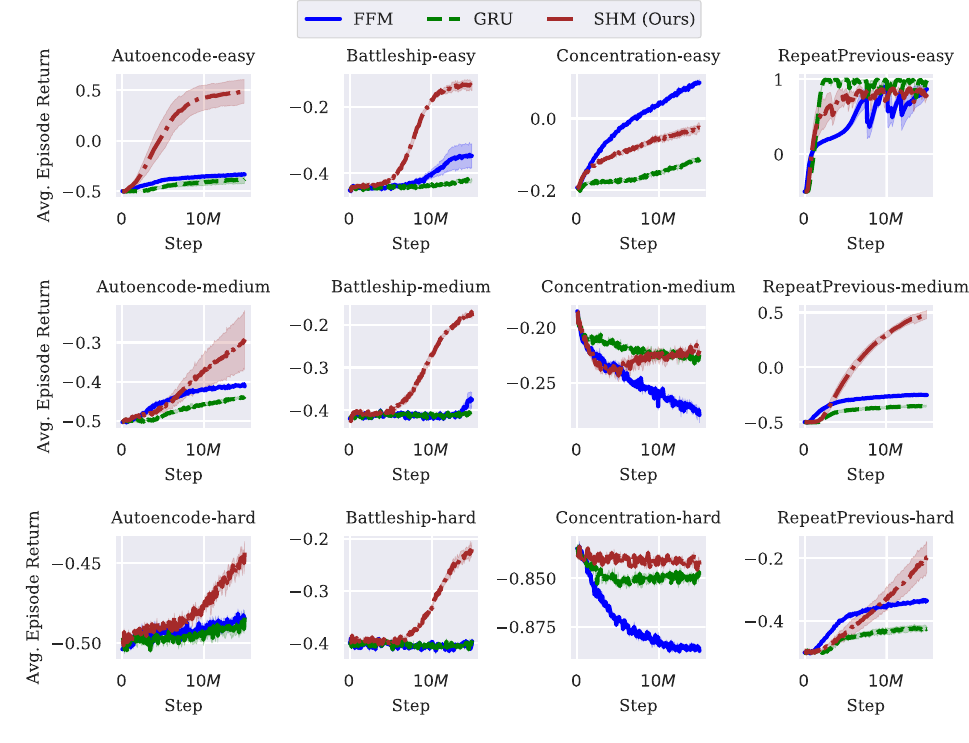}
\par\end{centering}
\caption{POPGym learning curves: Mean $\pm$ std. over 3 runs. \label{fig:POPGym-learning-curves:} }

\end{figure}

\end{document}